\documentclass{article} %
\usepackage{iclr2027_conference,times}

\usepackage{amsmath,amsfonts,bm}

\def\eqref#1{equation~\ref{#1}}

\def\1{\bm{1}}

\DeclareMathAlphabet{\mathsfit}{\encodingdefault}{\sfdefault}{m}{sl}
\SetMathAlphabet{\mathsfit}{bold}{\encodingdefault}{\sfdefault}{bx}{n}

\usepackage{hyperref}
\usepackage{url}

\usepackage{amsmath}        %
\usepackage{amssymb}        %

\usepackage{amsthm}         %
\usepackage{multirow}       %
\theoremstyle{plain}
\newtheorem{theorem}{Theorem}

\newtheorem{proposition}{Proposition}
\newtheorem{lemma}{Lemma}
\usepackage{wrapfig}
\usepackage{enumitem} %
\usepackage{simpleicons}

\newif\ifarxiv
\arxivtrue
\usepackage{graphicx}       %
\usepackage{algorithm}      %
\usepackage{algorithmic}    %
\usepackage{tabularx}       %
\usepackage{booktabs}       %
\usepackage{nicefrac}       %
\usepackage[table]{xcolor}  %

\usepackage{subcaption}

\title{Beam Search as Test-Time Self-Distillation via Counterfactual Contexts}

\author{Su Ee Tan \\
\textnormal{\texttt{stan.suee@gmail.com}}
\And
Xiaotong Ji \\
\textnormal{Huawei Noah’s Ark Lab }
\And 
Rasul Tutunov \\
\textnormal{Huawei Noah’s Ark Lab}
\And
Haitham Bou-Ammar \\
\textnormal{Huawei Noah’s Ark Lab}\\
\textnormal{UCL Centre for AI}
\And
Matthieu Zimmer \\
\textnormal{Huawei Noah’s Ark Lab}
}

\ifarxiv
\iclrfinalcopy
\fi

\begin{document}

\maketitle

\ifarxiv
\lhead{Preprint}

\fi

\begin{abstract}
Self-Distillation Fine-Tuning (SDFT) enables a language model to act as its own teacher: by conditioning on a demonstration, the model produces an implicit reward via pointwise mutual information, which guides on-policy learning without external supervision.
However, SDFT operates at \emph{training time}: it requires gradient updates and access to expert demonstrations, making it inapplicable at inference.
We propose \emph{test-time self-distillation}, a decoding-time method that extracts a steering signal from the self-distillation framework without any parameter updates, reward models, or training data.
Our key insight is that \emph{counterfactual contexts}, i.e. fixed textual templates that hypothetically prime the model for excellent versus poor reasoning, can substitute for the demonstration.
The log-odds ratio of a candidate answer under these two counterfactual conditions defines a new reward signal. We derive the optimal KL-regularized policy under this reward, which takes the form of a Gibbs reweighting of the base distribution.
Crucially, this reweighting is global: it cannot be decomposed into independent per-token operations without ignoring future trajectory quality.
We therefore approximate the target distribution via beam search.
Experiments on mathematical reasoning (MATH500), code generation (HumanEval), and graduate-level science QA (GPQA) across multiple model scales show that test-time self-distillation improves over standard sampling, low temperature, beam search and power sampling baselines on average, demonstrating that the self-distillation principle can be operationalized at inference time.
\end{abstract}

\section{Introduction}                                                                 
During training, language models can serve as their own teachers through Self-Distillation Fine-Tuning (SDFT) \citet{shenfeld2026selfdistillationenablescontinuallearning,hubotter2026reinforcementlearningselfdistillation}. The model can be conditioned on an expert demonstration $c$ and acts as a teacher for the same model without the demonstration.
The key object is the \emph{pointwise mutual information} (PMI) between the demonstration and the model's output, which serves as an implicit reward: $r(a, q, c) = \log \pi(a \mid q, c) - \log \pi(a \mid q)$.
By optimizing this reward under a KL constraint via on-policy distillation, SDFT eliminates the need for external task-specific rewards.
However, SDFT is fundamentally a \emph{training-time} method. It requires (i) an expert demonstration $c$ for each task, (ii) gradient updates to the model parameters, and (iii) an on-policy training loop that repeatedly samples from the student and computes loss against the teacher.
At \emph{test time}, when a user poses a question and the model must generate an answer, none of these components are available. The model is frozen, there is no demonstration, and there is no training loop.
This raises a natural question: \emph{can the self-distillation principle be operationalized at test time, without any parameter updates?}

We propose a solution to this question by replacing the expert demonstration with \emph{counterfactual contexts}.
Rather than conditioning on a real demonstration $c$, which is not available at test time, we condition the model on two fixed textual templates: a positive context $c^+$ that hypothetically primes the model for excellent reasoning (e.g., \textit{``This is an example for a response to the question with excellent reasoning:''}) and a negative context $c^-$ that primes it for poor reasoning (e.g., \textit{``This is an example for a response to the question with wrong reasoning:''}).
These contexts are never shown to the user and do not appear in the generated output.
They are \emph{counterfactual probes} that ask: \emph{how would the model's probability of this answer change if it were reasoning excellently versus poorly?}

The contrastive PMI between these counterfactual contexts defines an implicit reward:
\begin{equation}
    R(q, a) = \log \pi_\theta(a \mid q, c^+) - \log \pi_\theta(a \mid q, c^-).
    \label{eq:reward}
\end{equation}
Intuitively, an answer that is more likely under the excellent-reasoning counterfactual than under the poor-reasoning counterfactual receives a positive reward, steering generation toward responses the model associates with high-quality reasoning.

This contrastive formulation is necessitated by the absence of ground-truth supervision at test time. In SDFT, an expert demonstration $c$ is available, the reward $r(y, x, c) = \log \pi(y \mid x, c) - \log \pi(y \mid x)$ measures absolute alignment with $c$. Lacking any such oracle, we resort to a preference-based approach: rather than measuring absolute quality, we elicit the model's relative assessment under two counterfactual conditions.
Connecting this to the framework of \citet{rafailov2024directpreferenceoptimizationlanguage}, the reward $R(q, a)$ can be interpreted as a Bradley-Terry preference signal: the probability that the answer $a$ is ``better'' under $c^+$ than under $c^-$ is $\sigma(R(q, a) / \beta)$, where $\beta$ is a temperature parameter.

Maximizing this reward subject to a KL-divergence constraint yields a closed-form target distribution:
\begin{equation}
    \pi_{\text{target}}(a \mid q) \propto \pi_\theta(a \mid q) \exp\!\left(\frac{1}{\alpha} R(q, a)\right) = \pi_\theta(a \mid q) \left(\frac{\pi_\theta(a \mid q, c^+)}{\pi_\theta(a \mid q, c^-)}\right)^{\!1/\alpha},
    \label{eq:target}
\end{equation}
where $\alpha$ controls the steering strength. This is a Gibbs reweighting of the base policy, tilted toward answers favored under the excellent-reasoning counterfactual.

Sampling from $\pi_{\text{target}}$ exactly is intractable due to the global normalization constant $Z(q)$.
However, as \citet{ji2026scalablepowersamplingunlocking, nguyen2026modelknowsdecoderfinds} demonstrate in the context of power distribution sampling \cite{karan2025reasoningsamplingbasemodel}, local token-level approximations ignore the future: the per-token factorization $\prod_t \pi(a_t \mid q, a_{<t}) \exp(r_t / \alpha)$ drops the global normalization $Z(q)$, which depends on the entire trajectory. This is analogous to the gap between \emph{low-temperature sampling} (a local operation that sharpens each token independently) and the \emph{power distribution} $p^\alpha$ (a global reweighting that accounts for future trajectory quality). In fact, the power distribution is a special case where the reward is self-reinforcement, $R(a, q) = (\alpha - 1)\log \pi_\theta(a \mid q)$; our contrastive PMI reward $R(a, q) = \log \pi_\theta(a \mid q, c^+) - \log \pi_\theta(a \mid q, c^-)$ extends it to counterfactual contexts.

Because the target distribution is a global reweighting, we approximate it via \emph{beam search} as done by \citet{ji2026scalablepowersamplingunlocking}.
This approach evaluates the reward on partial trajectories. It requires two additional forward passes per beam compared to traditional beam search, however it does not introduce new parameters, training data, or external reward models.

We evaluate test-time self-distillation on mathematical reasoning (MATH500), code generation (HumanEval), and graduate-level science QA (GPQA) using Qwen3.5-0.8B, Qwen2.5-7B, Qwen3.5-2B and Deepseek-Math-7B-Instruct.
Across these settings, the method improves accuracy over all baselines (vanilla sampling, low-temperature sampling, beam search and power sampling).
These results suggest that the self-distillation principle can be recovered at inference time.

\paragraph{Contributions.} We make the following contributions:
\begin{itemize}[leftmargin=1.5em, topsep=0pt, itemsep=0pt]
    \item We introduce \emph{test-time self-distillation}, a framework that extends the SDFT self-distillation principle to inference time without parameter updates, external reward models, or training data.
    \item We show that sampling from this target distribution cannot be decomposed into independent per-token operations: exact per-token sampling requires a \emph{future correction} term (Proposition~\ref{thm:future_correction}) that is as intractable as the global normalization constant. 
    \item We propose \emph{contrastive beam search} and analyze that the truncation error of this approximation is bounded and vanishes as generation progresses (Theorem~\ref{thm:truncation_bias}).
    Across four models and three benchmarks, it outperforms standard sampling, low-temperature decoding, beam search, and power sampling, with ablations confirming that the semantic contrast drives the gains.
\end{itemize}

\section{Background}
\subsection{KL-Regularized Reward Maximization}
\label{sec:bg_kl}

A central framework in language model alignment is the maximization of an expected reward subject to a KL-divergence constraint \citep{Korbak2022RLWK,rafailov2024directpreferenceoptimizationlanguage}. Given a reward function $r(a, q)$ that evaluates the action $a$ for a question $q$ and a reference policy $\pi_k(\cdot \mid q)$, the optimization problem is:
\begin{equation}
    \pi^* = \arg\max_\pi \; \mathbb{E}_{a \sim \pi(\cdot \mid q)}\!\left[r(a, q)\right] - \beta \, D_{\mathrm{KL}}\!\left(\pi(\cdot \mid q) \,\|\, \pi_k(\cdot \mid q)\right),
    \label{eq:kl_reg}
\end{equation}
where $\beta > 0$ controls the strength of the KL penalty. This objective has a well-known closed-form solution:
\begin{equation}
    \pi^*(a \mid q) = \frac{1}{Z(q)} \, \pi_k(a \mid q) \, \exp\!\left(\frac{1}{\beta}\, r(a, q)\right),
    \label{eq:tilted}
\end{equation}
where $Z(q) = \sum_a \pi_k(a \mid q) \exp(r(a,q) / \beta)$ is a normalization constant. This \emph{tilted} or \emph{Gibbs} distribution reweights the reference policy by the exponential of the reward, concentrating probability mass on high-reward outputs while remaining anchored to the base distribution.

\subsection{Self-Distillation Fine-Tuning}
\label{sec:bg_sdft}

Self-Distillation Fine-Tuning (SDFT) \citep{shenfeld2026selfdistillationenablescontinuallearning} extends the on-policy distillation framework \citep{hinton2015distillingknowledgeneuralnetwork,xiong2024distillingmorphologyconditionedhypernetworksefficient} by removing external reward. Given a base model with policy $\pi_\theta$ and an expert demonstration $c$ for a task with prompt $q$, SDFT constructs a \emph{teacher} by conditioning the same model on the demonstration: $\pi(\cdot \mid q, c)$. The \emph{student} is the base model $\pi_\theta(\cdot \mid q)$. Training aims at minimizing the reverse KL divergence between student and teacher:
$\mathcal{L}(\theta) = \mathbb{E}_{a \sim \pi_\theta(\cdot \mid q)}\!\left[\log \frac{\pi_\theta(a \mid q)}{\pi(a \mid q, c)}\right].$

The key insight of SDFT is that this distillation objective is equivalent to policy gradient under an implicit reward derived from the \emph{pointwise mutual information} between the demonstration and the model's output. Substituting the teacher $\pi(\cdot \mid q, c)$ as the optimal policy $\pi^*$ in the KL-regularized framework Eq.~\ref{eq:kl_reg} yields the reward:
$r(a, q, c) = \log \pi(a \mid q, c) - \log \pi(a \mid q).$
This PMI reward measures how much the demonstration $c$ ``explains'' the output $a$: outputs whose probability increases under the demonstration receive positive reward.

SDFT requires three components that are unavailable at test time: (i)~an expert demonstration $c$, (ii)~gradient updates to $\theta$, and (iii)~an on-policy training loop. 
Our work replaces the demonstration with counterfactual contexts and replaces gradient-based optimization with decoding-time approximation, preserving the PMI reward structure while eliminating all training-time requirements.

\section{Method}
We now formalize test-time self-distillation. The method has three components: (1)~counterfactual contexts that define a contrastive reward (Section~\ref{sec:method_contexts}), (2)~a theoretical analysis showing that naive per-token decoding fails due to a future correction term (Section~\ref{sec:method_theory}), and (3)~a contrastive beam search algorithm that approximates the target distribution (Section~\ref{sec:method_beamsearch}).

\subsection{Counterfactual Contexts and Per-Token Reward}
\label{sec:method_contexts}

We replace the expert demonstration $c$ from SDFT with two question-independent fixed textual templates $c^+$ and $c^-$.
These are appended to the question $q$ when computing the model's log-probabilities. We propose to use the following contrastive PMI reward $R(q, a) = \log \pi_\theta(a \mid q, c^+) - \log \pi_\theta(a \mid q, c^-)$ (Eq.~\ref{eq:reward}), it decomposes via the autoregressive factorization into per-token rewards:
\begin{equation}
    R(a, q) = \sum_{t=1}^{T} r_t(a_t, q), \quad r_t(a_t, q) = \log \pi_\theta(a_t \mid q, a_{<t}, c^+) - \log \pi_\theta(a_t \mid q, a_{<t}, c^-),
    \label{eq:per_token_reward}
\end{equation}
where $r_t(a_t, q)$ measures the change in log-odds for token $a_t$ under the two counterfactual conditions given prefix $a_{<t}$. The KL-regularized optimal policy under this reward is the Gibbs reweighting $\pi_{\text{target}}(a \mid q) \propto \pi_\theta(a \mid q) \exp(R(a, q)/\alpha)$ (Eq.~\ref{eq:target}). The central challenge is sampling from this distribution.

\subsection{The Future Correction}
\label{sec:method_theory}

A naive approach to sampling from $\pi_{\text{target}}$ would greedily follow the locally reweighted distribution
\begin{equation}
    \tilde{\pi}(a_t \mid q, a_{<t}) \propto \pi_\theta(a_t \mid q, a_{<t}) \exp\!\left(\frac{1}{\alpha} r_t(a_t, q)\right).
    \label{eq:naive}
\end{equation}
This applies the contrastive reweighting independently at each token, analogous to how low-temperature sampling applies the power transformation locally \citep{ji2026scalablepowersamplingunlocking}. 
The following proposition shows that exact per-token sampling from $\pi_{\text{target}}$ instead requires a \emph{future correction} $\zeta_t$ accounting for downstream reward.

\begin{proposition}[Per-Token Decomposition with Future Correction]
\label{thm:future_correction}
Let $\pi_\theta$ be an autoregressive language model over vocabulary $\mathcal{V}$, and let the target distribution be
\begin{equation*}
    \pi_{\mathrm{target}}(a \mid q) = \frac{1}{Z(q)} \prod_{t=1}^{T} \pi_\theta(a_t \mid q, a_{<t}) \exp\!\left(\frac{1}{\alpha} r_t(a_t, q)\right),
\end{equation*}
where $r_t(a_t, q) = \log \pi_\theta(a_t \mid q, a_{<t}, c^+) - \log \pi_\theta(a_t \mid q, a_{<t}, c^-)$. Then for any partial sequence $a_{<t}$, the per-token conditional of the target distribution is
\begin{equation}
    \pi_{\mathrm{target}}(a_t \mid q, a_{<t}) = \frac{\pi_\theta(a_t \mid q, a_{<t}) \exp\!\left(\frac{1}{\alpha} r_t(a_t, q)\right) \zeta_t(a_t, q)}{\sum_{a' \in \mathcal{V}} \pi_\theta(a' \mid q, a_{<t}) \exp\!\left(\frac{1}{\alpha} r_t(a', q)\right) \zeta_t(a', q)},
    \label{eq:per_token_decomp}
\end{equation}
where the \emph{future correction} is
\begin{equation}
    \zeta_t(a_t, q) = \sum_{a_{t+1:T}} \prod_{s=t+1}^{T} \pi_\theta(a_s \mid q, a_{<t}, a_t, a_{<s}) \exp\!\left(\frac{1}{\alpha} r_s(a_s, q)\right).
    \label{eq:future_correction}
\end{equation}
\end{proposition}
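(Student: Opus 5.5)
The plan is to compute the conditional directly from its definition as a ratio of marginals, $\pi_{\mathrm{target}}(a_t \mid q, a_{<t}) = \pi_{\mathrm{target}}(a_{\le t} \mid q) / \pi_{\mathrm{target}}(a_{<t} \mid q)$. Each marginal is obtained by summing the joint over all completions. To keep notation light, write $w_s(a_s) := \pi_\theta(a_s \mid q, a_{<s}) \exp(\frac{1}{\alpha} r_s(a_s,q))$. This weight depends on the full prefix $a_{<s}$, both through $\pi_\theta$ and through the two context-conditioned terms in $r_s$. The joint is then $Z(q)^{-1}\prod_{s=1}^T w_s(a_s)$.

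\textbf{Step 1: the marginal of $a_{\le t}$.} Summing the joint over $a_{t+1:T}$, the factors with $s\le t$ depend only on $a_{\le t}$ and pull out of the sum. What remains is exactly the sum in Eq.~\ref{eq:future_correction}:
\[
\pi_{\mathrm{target}}(a_{\le t}\mid q) = \frac{1}{Z(q)}\Big(\prod_{s=1}^{t} w_s(a_s)\Big)\,\zeta_t(a_t,q).
\]
Here $\zeta_t$ implicitly depends on the whole prefix $a_{\le t}$, which matches the conditioning written in Eq.~\ref{eq:future_correction}.

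\textbf{Step 2: the marginal of $a_{<t}$.} Summing the expression from Step 1 over $a_t\in\mathcal{V}$ gives
\[
\pi_{\mathrm{target}}(a_{<t}\mid q) = \frac{1}{Z(q)}\prod_{s<t} w_s(a_s)\sum_{a'} w_t(a')\,\zeta_t(a',q).
\]

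\textbf{Step 3: take the ratio.} Dividing Step 1 by Step 2, the common factor $Z(q)^{-1}\prod_{s<t} w_s(a_s)$ cancels. This cancellation is the whole point: the intractable global constant $Z(q)$ disappears and leaves the locally normalized expression in Eq.~\ref{eq:per_token_decomp}. Expanding $w_t$ then gives the stated formula verbatim. For the boundary case $t=T$ we set $\zeta_T\equiv 1$ (the empty product), so the formula reduces to the naive local reweighting of Eq.~\ref{eq:naive}.

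\textbf{Main obstacle.} There is no analytic difficulty; the care lies in bookkeeping and well-definedness. First, when the factors with $s\le t$ are pulled out of the sum over $a_{t+1:T}$, they must not depend on future tokens. They do not, because all three conditionals entering $w_s$ are causal. Second, the denominator must be positive. It is, because $\pi_\theta(\cdot\mid q,a_{<t},c^-)>0$ for softmax models, which keeps $r_t$ finite, and because $\pi_\theta(a_{<t}\mid q)>0$ for any prefix we condition on. Third, the horizon must be handled consistently. If sequences have variable length via an EOS token, I would fix a maximal horizon $T$ and treat post-EOS tokens as deterministic padding with $w_s=1$, so that the products are well defined.
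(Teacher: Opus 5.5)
Your proposal is correct and follows essentially the same route as the paper: express the conditional as a ratio of marginals, pull the causal factors at positions $s \le t$ out of the sum over $a_{t+1:T}$ to expose $\zeta_t$, and treat $\zeta_T = 1$ as an empty product. The only difference is presentational. The paper starts directly from the ratio of suffix sums, with $Z(q)$ and the prefix weights already cancelled implicitly, whereas you carry out that cancellation and the well-definedness checks explicitly.
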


The proof is available in Appendix~\ref{app:proofs}.
The naive per-token distribution (Eq.~\ref{eq:naive}) corresponds to setting $\zeta_t \equiv 1$, which is exact only at the final token ($t = T$). At all earlier positions, it ignores how each token choice affects the reward of future completions, i.e. the same local--global gap that separates low-temperature sampling from the power distribution. Computing $\zeta_t$ exactly requires marginalizing over all future trajectories, which is as intractable as computing $Z(q)$.

\subsection{Contrastive Beam Search (CBS)}
\label{sec:method_beamsearch}

Since the target distribution is a global reweighting, we approximate it via block-wise beam search with contrastive scoring. The search operates over chunks of $K$ tokens rather than individual tokens, alternating between stochastic extension and contrastive pruning. At each iteration, $N$ active beams are extended by one chunk, scored under three prompt contexts, and pruned to the top $N/W$ beams, which are then replicated to restore the population. Completed trajectories are moved to a terminal pool. See Algorithm~\ref{alg:contrastive_beam} for the full details.

\begin{algorithm}[t]
\caption{Contrastive Beam Search}
\label{alg:contrastive_beam}
\begin{algorithmic}[1]
\REQUIRE Question $q$, model $\pi_\theta$, contexts $c^+, c^-$
\REQUIRE Population $N$, pruning factor $W$, block size $K$, iterations $T$, temperature $\tau$, steering strength $\alpha$
\ENSURE Selected response $\hat{a}$
\STATE Initialise $\mathcal{B} \leftarrow \{\}^{N}$; $\mathcal{A} \leftarrow \emptyset$; $t=0$
\WHILE{$t < T$}
    \STATE Set a temporary buffer:  $\mathcal{C} \leftarrow \emptyset$
    \FORALL{$a \in \mathcal{B}$}
        \STATE Sample $\delta \sim \pi_\theta(\cdot \mid q, a;\, \tau)$ with temperature $\tau$ at most $K$ tokens
        \STATE Concatenate: $a' \leftarrow [a,\delta]$ and compute $s(a', q)$ using Equation \ref{eq:beam_score}
        \IF{$a'$ terminates or $t = T$}
            \STATE $\mathcal{A} \leftarrow \mathcal{A} \cup \{(a', s(a',q))\}$
        \ELSE
            \STATE $\mathcal{C} \leftarrow \mathcal{C} \cup \{(a', s(a', q))\}$
        \ENDIF
    \ENDFOR
    \STATE Remove duplicate trajectories from $\mathcal{C}$
    \IF{$\mathcal{C} = \emptyset$} \STATE \textbf{break} \ENDIF
    \STATE $\mathcal{S} \leftarrow$ top $\min(N/W,\, |\mathcal{C}|)$ elements by highest score from $\mathcal{C}$
    \STATE $\mathcal{B} \leftarrow$ replicate $\mathcal{S}$ to $N$ beams
    \STATE Increment the counter $t = t+1$
\ENDWHILE
\STATE \RETURN $\hat{a} \leftarrow \arg\max_{(a, s) \in \mathcal{A}} s(a,q)$
\end{algorithmic}
\end{algorithm}

The score $s(a,q)$ of a beam composed of the tokens $a$ approximates the log of the unnormalized target distribution:
\begin{equation}\label{eq:beam_score}
\frac{1}{|a|} \sum_{t=1}^{|a|} \left[ \log \pi_\theta(a_t \mid q, a_{<t}) + \frac{1}{\alpha} \left( \log \pi_\theta(a_t \mid q, c^+, a_{<t}) - \log \pi_\theta(a_t \mid q, c^-, a_{<t}) \right) \right].
\end{equation}
The contrastive difference inside the inner parentheses corresponds to the per-token reward $r_t$ (Eq.~\ref{eq:per_token_reward}).
We use the mean rather than the sum to avoid length bias across beams of different lengths. Each beam requires two forward passes per chunk (base log probabilities can be reused from the generation process), so the total cost is slightly larger than standard beam search. However, this additional cost remains manageable because generation is usually more expensive than batched prefill operations in GPUs.

The beam score accumulates the contrastive reward over all tokens generated so far, which is equivalent to setting the future correction $\hat{\zeta}_t \equiv 1$ at each scoring point. Unlike naive per-token sampling (Eq.~\ref{eq:naive}), which truncates at every token position with horizon $T - t$, beam search generates $K$-token chunks and scores them exactly, so the truncation only affects the future \emph{beyond} each chunk (horizon $T - t - K$). The following theorem bounds the resulting approximation error.

\begin{theorem}[Truncation Bias of Contrastive Beam Search]
\label{thm:truncation_bias}
Assume per-token rewards are bounded: $|r_t(a_t, q)| \leq r_{\max}$ for all $t, a_t$, and let $K$ be the chunk size. At a scoring point at position $t$ (start of a chunk), the $K$ tokens within the chunk are scored exactly; the future correction $\zeta_t$ is truncated for the remaining $T - t - K$ tokens by setting $\hat{\zeta}_t \equiv 1$. The per-step TV distance between the truncated distribution $\tilde{\pi}_{\mathrm{trunc}}$ and the target $\pi_{\mathrm{target}}$ satisfies
\begin{equation}
    \left\|\tilde{\pi}_{\mathrm{trunc}}(\cdot \mid q, a_{<t}) - \pi_{\mathrm{target}}(\cdot \mid q, a_{<t})\right\|_{\mathrm{TV}} \leq \tanh\!\left(\frac{(T - t - K)\, r_{\max}}{\alpha}\right),
    \label{eq:truncation_bias}
\end{equation}
and the trajectory-level error compounds over $T/K$ scoring decisions to at most $\frac{r_{\max}}{\alpha} \cdot \frac{T(T-K)}{2K}$. Naive per-token sampling (Eq.~\ref{eq:naive}) truncates at every token with no chunk lookahead, giving the looser bound $\frac{r_{\max}}{\alpha} \cdot \frac{T(T-1)}{2}$; the chunk size $K$ yields an approximately $K$-fold reduction. Under the balanced-contexts condition $\mathbb{E}_{a_t \sim \pi_\theta}[r_t(a_t, q)] = 0$ (equivalently, the KL from the base model to $c^+$ equals the KL to $c^-$), the per-step bound improves to $\tanh\!\bigl(\frac{(T-t-K)\,r_{\max}^2}{4\alpha^2}\bigr)$ and the trajectory bound to $\frac{r_{\max}^2}{4\alpha^2} \cdot \frac{T(T-K)}{2K}$. In both cases, the per-step error vanishes as $t \to T - K$.
\end{theorem}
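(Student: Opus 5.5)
We prove the per-step bound first; the trajectory bounds then follow by summation. Throughout we use Proposition~\ref{thm:future_correction}. At a scoring point $t$ we pass the whole chunk $a_{t:t+K-1}$ through the exact tilted weights, so both the target conditional over the chunk and the truncated distribution have the form $p(x)\propto w(x)\,\zeta(x)$. Here $w$ is the exact weight of the chunk under base probabilities times $\exp(r/\alpha)$. The two distributions differ only in $\zeta$: the target uses the future correction $\zeta_{t+K-1}$ over the remaining $m=T-t-K$ tokens, while the truncated version sets $\hat\zeta\equiv 1$.

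\textbf{Step 1: bound the oscillation of $\log\zeta$.} Write $\zeta(x)=\mathbb{E}_{a_{t+K:T}\sim\pi_\theta(\cdot\mid q,x)}\big[\exp\big(\tfrac1\alpha\sum_{s} r_s\big)\big]$. This holds because $\pi_\theta$ is normalized, so the sum in Eq.~\ref{eq:future_correction} is an expectation. Since $|r_s|\le r_{\max}$, every $\zeta(x)$ lies in $[e^{-m r_{\max}/\alpha},e^{m r_{\max}/\alpha}]$. Hence $\sup_x\log\zeta(x)-\inf_x\log\zeta(x)\le 2m r_{\max}/\alpha=:2\Delta$.

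\textbf{Step 2: TV bound from bounded log-ratio.} We invoke a standard lemma. Let $q\propto w$ and $p\propto w\zeta$, with $\zeta$ taking values in $[L,U]$ and $U/L=e^{2\Delta}$. Then $\|p-q\|_{\mathrm{TV}}\le\tanh(\Delta/2)$, and in any case $\le\tanh(\Delta)$. The proof is short. The TV distance is $\sup_A\big(p(A)-q(A)\big)$. For fixed $q(A)=u$, the worst case puts $\zeta=U$ on $A$ and $\zeta=L$ off $A$. This gives $p(A)=uU/(uU+(1-u)L)$. Maximizing over $u$ yields $(\sqrt{U/L}-1)/(\sqrt{U/L}+1)=\tanh(\Delta/2)\le\tanh(\Delta)$. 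This gives Eq.~\ref{eq:truncation_bias}, which vanishes when $m=0$, i.e.\ $t=T-K$.

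\textbf{Step 3: compounding.} For the trajectory-level TV we use the chain rule / hybrid argument: the TV between two autoregressive (chunkwise) processes is at most the sum of expected per-step TVs. We then use $\tanh(x)\le x$ and sum over scoring points $t=0,K,2K,\dots$. This gives $\sum_j (T-(j+1)K)\,r_{\max}/\alpha$, which is approximately $\frac{r_{\max}}{\alpha}\cdot\frac{T(T-K)}{2K}$ after comparing the arithmetic sum to its integral. We will state the bound in this slightly rounded-up form. For naive decoding we take $K=1$ and sum over all $T$ positions, giving $\frac{r_{\max}}{\alpha}\frac{T(T-1)}{2}$. The ratio of the two bounds is $\approx K$.

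\textbf{Step 4: balanced contexts (main obstacle).} Under $\mathbb{E}_{\pi_\theta}[r_s]=0$, the trivial range $\pm m r_{\max}/\alpha$ for $\log\zeta$ is too crude. We would instead bound $\log\zeta$ by a chain of Hoeffding-lemma steps, peeling one token at a time by iterated conditioning from $s=T$ backward. At each step the conditional variable $r_s/\alpha$ has mean zero and lies in an interval of length $2r_{\max}/\alpha$. Hoeffding's lemma then gives $\mathbb{E}[e^{r_s/\alpha}\mid\cdot]\le e^{r_{\max}^2/(2\alpha^2)}$, and Jensen gives $\ge 1$. Thus $0\le\log\zeta\le m r_{\max}^2/(2\alpha^2)$. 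Here the lower bound follows from $\log\mathbb{E}e^{X}\ge\mathbb{E}X=0$ iterated, which requires the zero-mean condition at every prefix. The oscillation is then $2\Delta'=m r_{\max}^2/(2\alpha^2)$, so $\Delta'=m r_{\max}^2/(4\alpha^2)$. Step 2 then gives $\tanh(\Delta'/2)\le\tanh\big(\frac{m r_{\max}^2}{4\alpha^2}\big)$, and Step 3 repeated gives $\frac{r_{\max}^2}{4\alpha^2}\cdot\frac{T(T-K)}{2K}$.

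The delicate points are both in this balanced case. First, the iterated conditioning needs the mean-zero assumption at every prefix, which we take as the meaning of the balanced condition. Second, the upper and lower bounds on $\log\zeta$ must be combined carefully so that the constant $1/4$ comes out as stated.
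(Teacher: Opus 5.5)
Your proof is correct. Its overall skeleton matches the paper's:
\begin{itemize}
\item you confine $\zeta$ to $[e^{-m r_{\max}/\alpha}, e^{m r_{\max}/\alpha}]$ and turn this into a TV bound through a bounded-reweighting lemma;
\item you sum over the $T/K$ scoring points using $\tanh x\le x$ and a hybrid/telescoping argument, which is the paper's ``Markov chain perturbation bound'';
\item in the balanced case you replace the crude range with Hoeffding's lemma for the upper bound and Jensen for $\zeta\ge 1$.
\end{itemize}

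The one genuine difference is the TV lemma. The paper's Lemma~\ref{lem:tv_reweight} bounds pointwise $|1-\rho|\le\frac{C-c}{C+c}(1+\rho)$ for $\rho=fZ_w/Z_{wf}\in[c/C,C/c]$ and obtains $\frac{C-c}{C+c}$. With $C/c=e^{2\Delta}$ this is exactly $\tanh(\Delta)$. So the stated constants follow mechanically, including the $1/4$ from $c=1$, $C=e^{m r_{\max}^2/(2\alpha^2)}$. Your extremal-set argument puts $\zeta=U$ on $A$ and $\zeta=L$ off $A$, then optimizes over $u=q(A)$. It yields the tight value $\frac{\sqrt{U/L}-1}{\sqrt{U/L}+1}=\tanh(\Delta/2)$. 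You therefore actually prove per-step bounds with half the stated tanh argument, and trajectory bounds half as large. Relaxing to $\tanh(\Delta)$ recovers the theorem, so the ``delicate'' bookkeeping of the $1/4$ you worry about is a non-issue.

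Two small corrections:
\begin{itemize}
\item The arithmetic sum is exact. With $n=T/K$, $\sum_{j=1}^{n}(T-jK)=\frac{Kn(n-1)}{2}=\frac{T(T-K)}{2K}$, so no rounding is needed. An integral comparison with a decreasing summand would bound the sum from below, not above.
\item With your chunk $a_{t:t+K-1}$, the remaining block $a_{t+K:T}$ has $T-t-K+1$ tokens, not $T-t-K$. Either index the chunk as positions $t+1,\dots,t+K$ as the paper does, or shift $m$ accordingly.
\end{itemize}
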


The proof is in Appendix~\ref{app:proofs}. Theorem~\ref{thm:truncation_bias} shows that beam search's truncation error is bounded and, crucially, decreases as generation progresses: early chunks (small $t$) incur the largest error because they ignore the most future reward, while later chunks are scored nearly exactly. The chunk size $K$ provides two advantages over naive per-token truncation: (i) the effective horizon at each scoring point is $K$ tokens shorter ($T - t - K$ vs.\ $T - t$), and (ii) scoring decisions are made $T/K$ times rather than $T$ times. Together these yield an approximately $K$-fold reduction in trajectory error, formalizing the advantage of chunk-level scoring.
To further reduce bias, it is also possible to use lookahead Monte-Carlo rollouts at the cost of greater inference time \citep{ji2026scalablepowersamplingunlocking}.

\section{Experiments}

We evaluate our Contrastive Beam Search (CBS) algorithm on four models and three benchmarks to examine if counterfactual contrastive scoring provides a reliable test-time steering signal across different models and reasoning tasks. CBS consistently improves over standard sampling and search baselines and achieves the best performance in most settings. We further analyze its inference cost and the beam-search design choices that contribute to these gains.

\subsection{Performance Evaluation}

\textbf{Models.} We test four base models across different model scales and families to assess the robustness and generality of the CBS algorithm: Qwen3.5-0.8B, Qwen3.5-2B \citep{qwen3.5}, Qwen2.5-7B \citep{qwen2024}, and DeepSeek-Math-7B-Instruct \citep{shao2024}. All models are evaluated in their original pretrained or fine-tuned forms, without any additional fine-tuning.

\textbf{Benchmarks.} We evaluate on three benchmarks covering distinct task types. For mathematics, we use MATH500 \citep{lightman2023}, which requires multi-step quantitative reasoning. For code, we use HumanEval \citep{chen2021}, a set of Python programming tasks scored by unit tests. For question answering, we use GPQA \citep{rein2024}, which covers graduate-level questions in biology, physics, and chemistry.

\textbf{Baselines.} We compare CBS against standard autoregressive sampling, low-temperature sampling, and beam search in our main results. Standard and low-temperature sampling use temperatures of $\tau=1.0$ and $\tau=0.25$, respectively. The beam search baseline \citep{snell2024} ranks partial trajectories using only the base-model log-likelihood.
\begin{table}[ht]
\centering
\begin{tabular}{llccc}
\toprule
\textbf{Model} & \textbf{Method} & \textbf{MATH500} & \textbf{HumanEval} & \textbf{GPQA} \\
\midrule
\multirow{7}{*}{\textbf{Qwen3.5-0.8B}}
& Base $\tau=1.0$ & 0.198 & 0.085 & 0.020 \\
& Low Temperature $\tau=0.25$ & 0.342 & 0.225 & 0.101 \\
& Beam Search & 0.398 & 0.207 & 0.091 \\
& Power Sampling $\tau=0.25$ & 0.430 & 0.287 & \textbf{0.131} \\
& Power Sampling $\tau=1.0$ & 0.300 & 0.262 & 0.060 \\
& CBS $1/\alpha=0.25$ & \textbf{0.480} & 0.238 & 0.106 \\
& CBS $1/\alpha=0.7$ & 0.436 & \textbf{0.329} & \textbf{0.131} \\
\midrule
\multirow{7}{*}{\textbf{Qwen 2.5-7B}}
& Base $\tau=1.0$ & 0.326 & 0.500 & 0.222 \\
& Low Temperature $\tau=0.25$ & 0.508 & 0.677 & 0.288 \\
& Beam Search & 0.570 & 0.707 & 0.247 \\
& Power Sampling $\tau=0.25$ & 0.626 & 0.738 & 0.283 \\
& Power Sampling $\tau=1.0$ & 0.450 & 0.665 & 0.242 \\
& CBS $1/\alpha=0.25$ & 0.610 & \textbf{0.744} & \textbf{0.313} \\
& CBS $1/\alpha=0.7$ & \textbf{0.640} & 0.732 & 0.235 \\
\midrule
\multirow{7}{*}{\textbf{Qwen3.5-2B}}
& Base $\tau=1.0$ & 0.444 & 0.439 & 0.106 \\
& Low Temperature $\tau=0.25$ & 0.632 & 0.600 & 0.217 \\
& Beam Search & 0.656 & 0.560 & 0.177 \\
& Power Sampling $\tau=0.25$ & 0.646 & 0.561 & 0.241 \\
& Power Sampling $\tau=1.0$ & 0.578 & 0.524 & 0.176 \\
& CBS $1/\alpha=0.25$ & 0.672 & \textbf{0.604} & 0.247 \\
& CBS $1/\alpha=0.7$ & \textbf{0.694} & 0.585 & \textbf{0.258} \\
\midrule
\multirow{7}{*}{\begin{tabular}[c]{@{}l@{}}\textbf{Deepseek-Math} \\ \textbf{-7B-Instruct}\end{tabular}}
& Base $\tau=1.0$ & 0.342 & 0.335 & 0.298 \\
& Low Temperature $\tau=0.25$ & 0.436 & 0.445 & 0.298 \\
& Beam Search & 0.432 & 0.494 & 0.293 \\
& Power Sampling $\tau=0.25$ & 0.430 & \textbf{0.537} & 0.278 \\
& Power Sampling $\tau=1.0$ & 0.410 & 0.500 & 0.303 \\
& CBS $1/\alpha=0.25$ & \textbf{0.462} & 0.512 & \textbf{0.369} \\
& CBS $1/\alpha=0.7$ & 0.458 & 0.518 & 0.258 \\
\bottomrule
\end{tabular}
\caption{Performance comparison of CBS across 4 models and 3 benchmarks. 
}
\label{tab:model_eval}
\end{table}
We additionally compare against Power Sampling \cite{ji2026scalablepowersamplingunlocking}, a training-free and verifier-free test-time search method. Across all methods, we use the same task prompts and cap the maximum generation length at $3072$ tokens. For standard beam search, we use a candidate pool of $N=16$ beams and an expansion beam width of $W=4$.

\textbf{Main Results.}
Table~\ref{tab:model_eval} reports pass@1 accuracy on MATH500, HumanEval and GPQA for four models. All comparisons in this work are absolute differences in accuracy, reported in
percentage points.
Across all 12 settings, CBS outperforms both low-temperature sampling and standard beam search, with gains of up to 13.8\% and 12.2\%, respectively. The margin over standard beam search indicates that the improvements cannot be attributed to multi-trajectory search alone. Against power sampling the margin is narrower: CBS wins or tied-best in 11 of the 12 settings, by up to 6.6\%. Overall, CBS is best or tied-best in 11 of the 12 settings, and it leads on MATH500 and GPQA for all four
models. The preferred coefficient is model and task-dependent.

Power sampling relies on a sharpened sampling distribution. Its $\tau=0.25$ variant beats its
$\tau=1.0$ variant in 11 of the 12 settings. CBS improves
accuracy for every model, which we attribute to reweighting tokens by the contrast between the positive and the negative contexts rather than by a single temperature. But, more importantly, CBS has the exact opposite behavior of Power Sampling concerning temperature. It works better with higher temperature, which implies that it is also able to produce more diverse trajectories as we will show in the next pass@$k$ experiment. 

\textbf{Inference Efficiency.}
\begin{figure}[t]
    \centering
    \includegraphics[width=\textwidth]{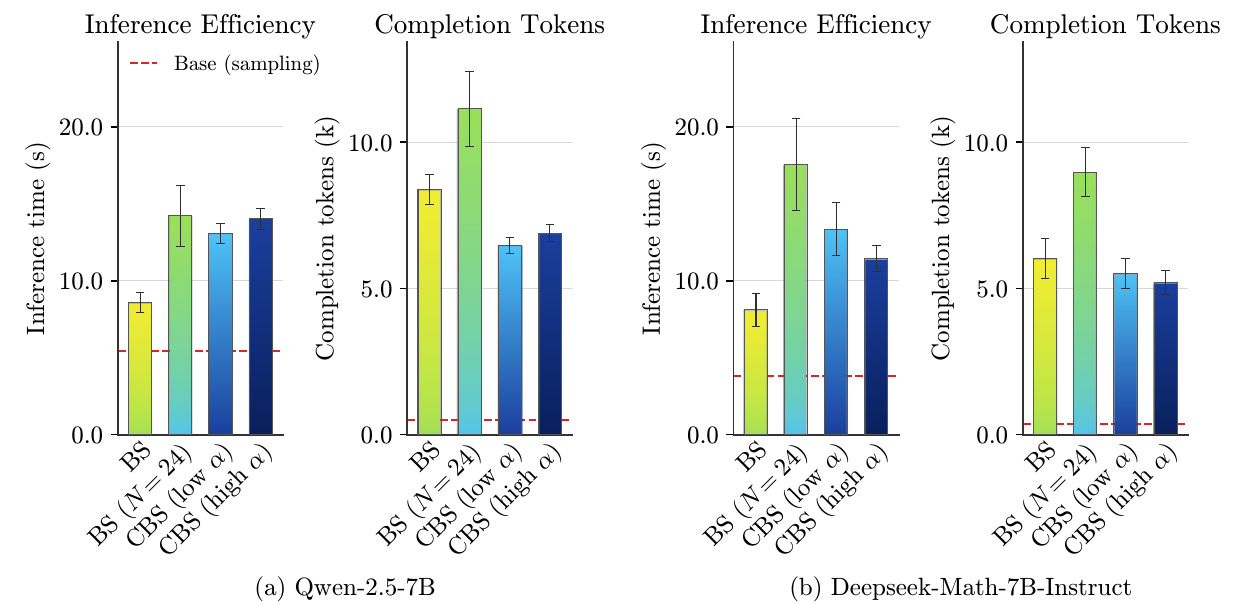}
    \caption{Inference efficiency of (a) Qwen2.5-7B and (b) DeepSeek-Math-7B-Instruct on the MATH500 benchmark.}
    \label{fig:inference_efficiency}
\end{figure}
We next examine the inference cost of CBS.
Figure~\ref{fig:inference_efficiency} reports 
wall-clock inference time and total completion tokens per prompt, counting every candidate generated
\ifarxiv
\begin{figure}[h]
    \centering
    \includegraphics[width=0.3\columnwidth]{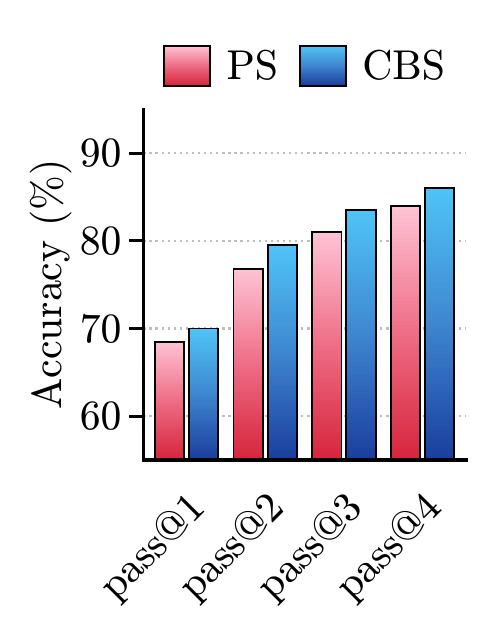}
    \caption{Pass@$k$ on the MATH500 dataset between Power sampling and CBS (Qwen3.5-2B).}
    \label{fig:passk_ps_vs_cbs_qwen}
\end{figure}
\else
\begin{wrapfigure}[19]{r}{0.35\textwidth}
    \centering
    \includegraphics[width=0.3\columnwidth]{passk_ps_vs_cbs_qwen35_2b.pdf}
    \caption{Pass@$k$ on the MATH500 dataset between Power sampling and CBS (Qwen3.5-2B).}
    \label{fig:passk_ps_vs_cbs_qwen}
\end{wrapfigure}
\fi
rather than only the selected one, for Qwen2.5-7B and DeepSeek-Math-7B-Instruct on MATH500.
The standard sampling baselines are the fastest.
Standard beam search costs about 2 times more.
CBS is more expensive, requiring roughly $1.5\times$ the cost of standard beam search for both models. 
This overhead does not come from longer outputs. 
CBS returns 20\% fewer completion tokens than beam search for Qwen2.5-7B and 11\% fewer for DeepSeek-Math-7B-Instruct, yet it remains slower than beam search on both models.
The added latency therefore reflects the repeated scoring of each surviving candidate under the positive and negative contexts rather than additional generation.
On Qwen2.5-7B the lower $\alpha$ setting is both faster and more token-efficient, while on DeepSeek-Math-7B-Instruct the higher $\alpha$ wins on both counts.

\textbf{Pass@$\boldsymbol{k}$ Results.} We report pass@$k$ performance by sampling $k$ independent completions on the MATH500 dataset obtained with Qwen3.5-2B in Figure~\ref{fig:passk_ps_vs_cbs_qwen}. Overall, CBS  consistently surpasses power sampling at every evaluated value of $k$. 
While pass@$k$ inherently increases for any method, CBS explicitly steers the decoding process away from common reasoning pitfalls and toward the correct trajectory. Further pass@$k$ results can be found in Appendix \ref{app:passk}.

\subsection{Ablations and Analysis}

\textbf{Effect of Contrastive Context Semantics.}
We conduct a context ablation to determine whether the gains from contrastive scoring arise from the semantic opposition between the positive and negative reasoning contexts, rather than from added context alone. Using Qwen3.5-0.8B, we fix the model and decoding configuration and compare three conditions: no context, the proposed contrastive contexts (excellent versus wrong reasoning), and a neutral control context that assigns responses to two groups without expressing a quality preference. As shown in Table~\ref{tab:cue_ablation}, the contrastive context improves accuracy by 11.1\% and 12.9\% over the empty-context and neutral-context conditions, respectively. The neutral context does not improve over the empty-context condition, suggesting that the gain is not explained by the presence of additional prompt text alone. Instead, these results support the interpretation that the contrastive semantic signal steers the model toward higher-performing responses on MATH500. See Appendix \ref{app:cue_content_ablation} for an extended ablation of various contexts.

\begin{table}[t]
\centering
\small
\renewcommand{\arraystretch}{1.0}
\setlength{\tabcolsep}{1pt}
\renewcommand{\tabularxcolumn}[1]{m{#1}}
\begin{tabularx}{\columnwidth}{@{}m{0.11\columnwidth}>{\raggedright\arraybackslash}Xm{0.10\columnwidth}@{}}
\toprule
\textbf{Context} & \textbf{Content} & \textbf{MATH500} \\
\midrule
No context & - & 0.398 \\
\arrayrulecolor{gray!40}\midrule\arrayrulecolor{black}
Contrastive &
{\scriptsize \texttt{Positive:} \texttt{``This is an example for a response with excellent reasoning:''}\newline
\texttt{Negative:} \texttt{``This is an example for a response with wrong reasoning:''}
}&
\textbf{0.509} \\
\arrayrulecolor{gray!40}\midrule\arrayrulecolor{black}
Neutral &
{\scriptsize \texttt{Positive:} \texttt{``This is an example for a response to the question assigned to control group one:''}\newline
\texttt{Negative:} \texttt{``This is an example for a response to the question assigned to control group two:''} 
}&
0.380 \\
\bottomrule
\end{tabularx}
\caption{Context ablation for Qwen3.5-0.8B on MATH500.
}
\label{tab:cue_ablation}
\end{table}

\textbf{Effect of beam-population budget.}
Because CBS is taking more search time, we increase the standard beam search population from $N=16$ to $N=24$ to test whether allocating more candidate trajectories allows likelihood-based search to overtake CBS with $N=16$. Relative to the mean of the two CBS settings, beam search at N = 24 uses $1.7\times$ as many completion tokens and
takes $1.1\times$ to $1.4\times$ as long on Qwen2.5-7B and DeepSeek-Math-7B-Instruct respectively (Figure~\ref{fig:inference_efficiency}).
\begin{table}[ht]
\centering
\small
\setlength{\tabcolsep}{6pt}
\begin{tabular}{lcccc}
\toprule
\textbf{Method} & \textbf{Qwen3.5-0.8B} & \textbf{Qwen2.5-7B} & \textbf{Qwen3.5-2B} & \textbf{DeepSeek-Math-7B-Instruct} \\
\midrule
Beam search ($N=24$) & 0.391 & 0.600 & 0.676 & 0.452 \\
CBS $1/\alpha=0.25$ & \textbf{0.480} & 0.610 & 0.672 & \textbf{0.462} \\
CBS $1/\alpha=0.7$ & 0.436 & \textbf{0.640} & \textbf{0.694} & 0.458 \\
\bottomrule
\end{tabular}
\caption{Beam-population budget ablation on MATH500.
}
\label{tab:beam_budget_ablation}
\end{table}

\ifarxiv
\begin{figure}[ht]
    \centering
    \includegraphics[width=0.3\columnwidth]
    {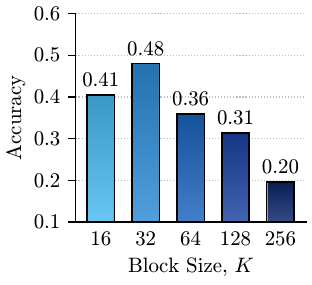}
    \caption{Block size $K$ effect on CBS.}
    \label{fig:block_size_ablation}
\end{figure}
\else
\begin{wrapfigure}[15]{r}{0.35\textwidth}
    \centering
    \includegraphics[width=0.3\columnwidth]
    {block_size_ablation.pdf}
    \caption{Block size $K$ effect on CBS.}
    \label{fig:block_size_ablation}
\end{wrapfigure}
\fi
As shown in Table~\ref{tab:beam_budget_ablation}, for every model, at least one CBS setting outperforms the larger standard beam search baseline.
Thus, increasing the standard beam population can narrow the gap, but does not uniformly recover the gains from CBS, supporting the value of the contrastive ranking signal beyond simply expanding the candidate budget.

\textbf{Effect of beam search block size.}
We vary the block size $K$ to control how often contrastive scoring intervenes during beam search using Qwen3.5-0.8B on MATH500.
As shown in Figure~\ref{fig:block_size_ablation}, accuracy follows an inverted-U pattern: it rises to a peak at $K=32$, and then declines monotonically. 
Large blocks leave few opportunities to redirect unproductive trajectories, while $K=16$ reranks on too little continuation context to score trajectories reliably. Hence, $K=32$ balances these effects.

\section{Related Work}

\textbf{Inference-Time Search.}
Test-time scaling improves language-model reasoning by allocating additional computation during inference rather than changing model parameters \citep{snell2024,welleck2024metageneration,ji2025surelysafealignmentlarge}.
Parallel approaches generate multiple complete reasoning trajectories and aggregate their answers, for example through majority voting in self-consistency \citep{wang2022self} or through verifier and reward-model scores \citep{lightman2023,snell2024}.
Their inference cost grows with the number and length of sampled trajectories, and the most effective allocation of a fixed compute budget depends on problem difficulty \citep{snell2024}.
DeepConf filters low-confidence reasoning traces during or after generation \citep{fu2025deepconf}, while Confidence-Informed Self-Consistency weights sampled answers by model-reported confidence to reduce the number of trajectories needed for aggregation \citep{taubenfeld2025confidence}.
Sequential approaches instead spend inference compute refining a single response, as in Self-Refine's iterative feedback-and-revision loop \citep{madaan2023selfrefine}.
Search-based methods allocate compute within generation: Tree of Thoughts explores explicit intermediate reasoning states \citep{yao2023tree}, ARGS adjusts token probabilities using an external reward signal \citep{khanov2024args}, and TreeBoN branches and prunes partial responses using token-level rewards derived from Direct Preference Optimization \citep{qiu2024treebon}.
Our method belongs to this search-based family, but differs from conventional token-level beam search: it stochastically expands trajectories with multi-token chunks, scores their accumulated responses, and retains a fraction for further expansion.
We therefore use beam-style search as the computational mechanism for allocating test-time compute, rather than treating the search algorithm itself as the contribution.

\textbf{Contrastive and Context-Conditioned Decoding.}
Contrastive Decoding compares an expert and an amateur language model and favors tokens preferred by the expert, subject to a plausibility constraint \citep{li2023contrastive}.
DoLa obtains a contrastive signal within a single model by comparing logits from later and earlier layers \citep{chuang2024dola}, while Asymptotic Probability Decoding interprets and modifies contrastive decoding through extrapolation toward a hypothetical larger model \citep{chang2024explainingimprovingcontrastivedecoding}.
Other methods construct contrasts through the input context: Context-Aware Decoding compares predictions with and without supporting context \citep{shi2023trusting}, and factual-versus-hallucination prompting contrasts output distributions induced by opposing prompts \citep{lv2024improving, yang2024improving}.
Thinking by Subtraction applies contrastive correction selectively at low-confidence positions during reasoning \citep{tang2026thinking}.
These methods primarily intervene in next-token prediction.
In contrast, we use the likelihood difference induced by positive and negative reasoning contexts to score an accumulated trajectory after each sampled chunk.
This contrastive score determines which partial trajectories receive further test-time computation, connecting context-conditioned decoding to the beam-style search procedure above.

\section{Conclusion and Future Work}

We have shown that the self-distillation principle, i.e. using a model's own conditional distributions as a supervisory signal, can be operationalized at test time.
By replacing expert demonstrations with counterfactual contexts, we derived a contrastive PMI reward whose KL-regularized optimum is a Gibbs reweighting of the base policy.
We analyzed that this reweighting is fundamentally global: naive per-token decoding ignores a future correction term.
Contrastive beam search approximates the target distribution by steering generation block by block, and experiments across four models and three benchmarks confirm that it outperforms sampling, beam search, and power sampling baselines on average. Ablations show that the gains arise from the semantic opposition of the contrastive contexts and from intermediate steering during generation, not from increased inference time.

Several directions remain open. The current contrastive contexts are fixed and manually designed; learning or searching for stronger context pairs, potentially in a task-adaptive manner, could improve the steering signal. Reducing the scoring overhead, for example through KV-cache sharing across the three context-conditioned forward passes, would lower the practical cost. More broadly, we view test-time self-distillation as a step toward continual learning agents that leverage their own internal distributions to adapt behavior at inference time: an agent could systematically propose and evaluate counterfactual contexts to guide multi-step planning and action selection without parameter updates.

\ifarxiv
\else
\subsection*{AI use statement}

In this work, we used generative AI for text refinement, grammatical editing, and generating code to plot figures. We did not use AI to formulate our scientific claims, mathematical derivations, or experimental results. We manually reviewed all AI-generated text for accuracy, and LLM-generated code was verified and tested for correctness. We take full responsibility for the final content of this paper.

\subsection*{Ethics statement}

This research complies with the ICLR Code of Ethics. All experiments were conducted using publicly available benchmark datasets free of personally identifiable information, and the study involves no human subjects or new data collection. Because our focus is strictly on improving large language model performance during test-time inference, this work does not involve the deployment of systems that cause negative societal impacts.

\subsection*{Reproducibility statement}

Our code will be released upon acceptance. All benchmark datasets (MATH500, HumanEval, GPQA) and models (Qwen2.5-7B, Qwen3.5-2B, Qwen3.5-0.8B, DeepSeek-Math-Instruct) used in this work are publicly available.
To ensure reproducibility, we have included complete technical details for our proposed algorithms, mathematical derivations, and theoretical assumptions throughout the main text and appendices. Appendix \ref{app:experimental_details} provides further experimental setups such as prompts, hyperparameter settings, and evaluation protocols. Additionally, all theoretical results and their underlying assumptions are formally stated in Proposition \ref{thm:future_correction}, Theorem \ref{thm:truncation_bias} and in Appendix \ref{app:proofs}. 

\fi

\bibliography{iclr2027_conference}
\bibliographystyle{iclr2027_conference}

\appendix
\newpage
\section{Proofs}
\label{app:proofs}

Proposition \ref{thm:future_correction} is the contrastive-reward equivalent of a known result on the soft-Bellman view of entropy-regularized reinforcement learning \citep{haarnoja2018softactorcriticoffpolicymaximum,ziebart2008maximum}. 
Our contribution is its application to the contrastive PMI reward Eq.~\ref{eq:reward} and the resulting characterization of how truncating the future correction shapes the approximation error of contrastive beam search.

\begin{proof}[Proof of Proposition \ref{thm:future_correction}]
Fix $t$ and partial sequence $a_{<t}$. The per-token conditional is obtained by marginalizing over future continuations:
\begin{align*}
    \pi_{\mathrm{target}}(a_t \mid q, a_{<t})
    &= \sum_{a_{t+1:T}} \pi_{\mathrm{target}}(a_{t:T} \mid q, a_{<t}) \\
    &= \frac{\sum_{a_{t+1:T}} \prod_{s=t}^{T} \pi_\theta(a_s \mid q, a_{<s}) \exp\!\left(\frac{1}{\alpha} r_s(a_s, q)\right)}{\sum_{a_{t:T}} \prod_{s=t}^{T} \pi_\theta(a_s \mid q, a_{<s}) \exp\!\left(\frac{1}{\alpha} r_s(a_s, q)\right)}.
\end{align*}
Factoring the numerator by separating position $t$ from future terms $s > t$:
\begin{align*}
    &\sum_{a_{t+1:T}} \prod_{s=t}^{T} \pi_\theta(a_s \mid q, a_{<s}) \exp\!\left(\frac{1}{\alpha} r_s(a_s, q)\right)
    = \\
    &\pi_\theta(a_t \mid q, a_{<t}) \exp\!\left(\frac{1}{\alpha} r_t(a_t, q)\right) \cdot \underbrace{\sum_{a_{t+1:T}} \prod_{s=t+1}^{T} \pi_\theta(a_s \mid q, a_{<t}, a_t, a_{<s}) \exp\!\left(\frac{1}{\alpha} r_s(a_s, q)\right)}_{\zeta_t(a_t, q)}.
\end{align*}
Applying the same factorization to each $a'_t$ in the denominator yields Eq.~\ref{eq:per_token_decomp}. For the final token ($t = T$), the future correction reduces to $\zeta_T = 1$ (empty product), recovering the naive per-token distribution.
\end{proof}

\begin{lemma}[TV Distance under Bounded Reweighting]
\label{lem:tv_reweight}
Let $p(a) = \frac{w(a)}{\sum_{a'} w(a')}$ and $q(a) = \frac{w(a) f(a)}{\sum_{a'} w(a') f(a')}$ for positive weights $w(a) > 0$ and bounded function $f(a) \in [c, C]$ with $0 < c \leq C < \infty$. Then
\begin{equation*}
    \|p - q\|_{\mathrm{TV}} \leq \frac{C - c}{C + c}.
\end{equation*}
\end{lemma}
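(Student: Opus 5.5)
Normalize $w$ so that $\sum_a w(a)=1$; then $p=w$ is itself a probability distribution, and $q(a)=p(a)f(a)/\bar f$ with $\bar f:=\mathbb{E}_p[f]\in[c,C]$. The TV distance can then be written as
\begin{equation*}
\|p-q\|_{\mathrm{TV}}=\sum_{a:\,f(a)>\bar f}\bigl(q(a)-p(a)\bigr)=\frac{1}{\bar f}\sum_{a}p(a)\bigl(f(a)-\bar f\bigr)_+=\frac{\mathbb{E}_p[(f-\bar f)_+]}{\bar f}.
\end{equation*}
This reduces the lemma to a one-dimensional extremal problem: maximize $\mathbb{E}_p[(f-\bar f)_+]/\bar f$ over all distributions of the random variable $X=f(a)$ supported on $[c,C]$.

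The map $x\mapsto (x-m)_+$ is convex. So for a fixed mean $m=\bar f$, the quantity $\mathbb{E}[(X-m)_+]$ is maximized by the two-point distribution on $\{c,C\}$ with mean $m$. This is the standard convex-order argument: any $x\in[c,C]$ can be written as a mixture of $c$ and $C$ preserving the mean, and by convexity this does not decrease $(x-m)_+$. For that distribution $P(X=C)=(m-c)/(C-c)$, which gives
\begin{equation*}
\mathbb{E}[(X-m)_+]=\frac{(C-m)(m-c)}{C-c},\qquad \|p-q\|_{\mathrm{TV}}\le g(m):=\frac{(C-m)(m-c)}{m(C-c)}.
\end{equation*}

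It remains to maximize $g$ over $m\in[c,C]$. Write $g(m)=\bigl(-m+(C+c)-Cc/m\bigr)/(C-c)$. This is concave in $m$, and its derivative vanishes at $m=\sqrt{Cc}$. At that point
\begin{equation*}
g(\sqrt{Cc})=\frac{C+c-2\sqrt{Cc}}{C-c}=\frac{(\sqrt C-\sqrt c)^2}{(\sqrt C-\sqrt c)(\sqrt C+\sqrt c)}=\frac{\sqrt C-\sqrt c}{\sqrt C+\sqrt c}.
\end{equation*}
Since $\sqrt C-\sqrt c\le \sqrt C+\sqrt c$ and $\sqrt C+\sqrt c\ge 2\sqrt c$, this quantity is at most $(C-c)/(C+c)$. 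Indeed, $(C-c)/(C+c)=(\sqrt C-\sqrt c)(\sqrt C+\sqrt c)/(C+c)$, and the required inequality $(\sqrt C+\sqrt c)^2\ge C+c$ holds. This yields the stated bound.

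The main obstacle is the convex-order reduction to the two-point distribution, since that step must be carried out rigorously with the mean held fixed. A second point is that the bound actually proved is the sharper $\tanh\bigl(\tfrac14\log(C/c)\bigr)$. The lemma's $(C-c)/(C+c)=\tanh\bigl(\tfrac12\log(C/c)\bigr)$ follows from it by monotonicity of $\tanh$. This is also the form needed to obtain the $\tanh$ expressions in Theorem~\ref{thm:truncation_bias} with $f=\zeta_t$ and $C/c\le e^{2(T-t-K)r_{\max}/\alpha}$.
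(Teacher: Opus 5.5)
Your proof is correct and takes a genuinely different route from the paper's. The paper argues pointwise on the likelihood ratio $r(a)=q(a)/p(a)=f(a)Z_w/Z_{wf}\in[c/C,\,C/c]$. It shows $|1-r|\le\frac{C-c}{C+c}(1+r)$ on that interval, via monotonicity of $|1-r|/(1+r)$ on each side of $1$, and then sums against $p$ using $\sum_a p(a)r(a)=1$.

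You instead compute the TV distance exactly as $\mathbb{E}_p[(f-\bar f)_+]/\bar f$, reduce to two-point laws by convexity, and then optimize over the mean. The step you flag as the main obstacle is a one-line chord bound. For $x,m\in[c,C]$ with $C>c$, $(x-m)_+\le\frac{(x-c)(C-m)}{C-c}$. On $[c,m]$ the left side vanishes. On $[m,C]$ both sides are linear, agree at $x=C$, and the right side dominates at $x=m$. Taking expectations gives $\mathbb{E}_p[(f-m)_+]\le\frac{(m-c)(C-m)}{C-c}$, and the case $C=c$ is trivial.

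Your first justification of the final comparison (from $\sqrt C-\sqrt c\le\sqrt C+\sqrt c$ and $\sqrt C+\sqrt c\ge2\sqrt c$) does not by itself imply it. The reduction to $(\sqrt C+\sqrt c)^2\ge C+c$ in your next sentence is the actual argument.

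As for what each approach buys: the paper's argument is shorter, but it uses only $r\in[c/C,C/c]$ and $\mathbb{E}_p[r]=1$. Under those constraints alone, $\frac{C-c}{C+c}$ is already best possible, so the argument cannot do better. It ignores that $r=f/\bar f$ satisfies $\max_a r(a)/\min_a r(a)\le C/c$. Your argument exploits exactly this extra structure (the range of $f$ coupled with its mean). It yields $\frac{\sqrt C-\sqrt c}{\sqrt C+\sqrt c}=\tanh\bigl(\tfrac14\log(C/c)\bigr)$, which is tight: it is attained when $f$ takes only the values $c$ and $C$ with $\bar f=\sqrt{Cc}$. Used in Theorem~\ref{thm:truncation_bias}, it would halve the arguments of the per-step $\tanh$ bounds, and hence the summed trajectory bounds.
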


This is the classical bounded-likelihood-ratio bound on total variation; we restate it for completeness \citep{tsybakov2009introduction}.

\begin{proof}
For any event $A$:
\begin{align*}
    |p(A) - q(A)| &= \left|\sum_{a \in A} w(a) \left(\frac{1}{Z_w} - \frac{f(a)}{Z_{wf}}\right)\right| = \left|\sum_{a \in A} \frac{w(a)}{Z_w}\left(1 - \frac{f(a) Z_w}{Z_{wf}}\right)\right|.
\end{align*}
Since $c \leq f(a) \leq C$, we have $\frac{c \, Z_w}{Z_{wf}} \leq \frac{f(a) Z_w}{Z_{wf}} \leq \frac{C \, Z_w}{Z_{wf}}$. Noting that $\frac{Z_w}{Z_{wf}} \in [\frac{1}{C}, \frac{1}{c}]$ (because $c \, Z_w \leq Z_{wf} \leq C \, Z_w$), the ratio $\frac{f(a) Z_w}{Z_{wf}}$ ranges in $[\frac{c}{C}, \frac{C}{c}]$. Using the equivalent form for total variation distance:
\begin{align}\label{Eq_interm}
    &||p - q||_{\mathrm{TV}} = \frac{1}{2}\sum_{a}|p(a) - q(a)| = \frac{1}{2}\sum_{a}p(a)\left|1 - \frac{q(a)}{p(a)}\right| = \frac{1}{2}\sum_{a}p(a)\left|1 - \frac{f(a)Z_w}{Z_{wf}}\right|
\end{align}
Let us study the expression $\left|1 - \frac{f(a)Z_w}{Z_{wf}}\right|$ with more care. Let $g(r) = \frac{|1-r|}{1+r}$ and parameter $r$ belong to the interval $r\in \left[\frac{c}{C}, \frac{C}{c}\right]$. We can make the following two observations:
\begin{enumerate}
    \item If $r \ge 1$, then $g(r) = \frac{r-1}{r+1}$ and $g'(r) = \frac{2}{(r+1)^2} > 0$ for any $r$. Hence, for $r\in [1,\frac{C}{c}]$ due to monotonic increment of function $g(r)$ we can write:
    \begin{align*}
        g(r) \le g\left(\frac{C}{c}\right) = \frac{\frac{C}{c} - 1}{\frac{C}{c} + 1} = \frac{C-c}{C+c}
    \end{align*}
    \item If $r<1$, then $g(r) = \frac{1-r}{1+r}$ and $g'(r) = -\frac{2}{(r+1)^2} < 0$ for any $r$. Hence, for $r\in [\frac{c}{C},1)$ due to monotonic decrement of function $g(r)$ we can write:
    \begin{align*}
        g(r) \le g\left(\frac{c}{C}\right) = \frac{1 - \frac{c}{C}}{\frac{c}{C} + 1} = \frac{C-c}{C+c}
    \end{align*}
\end{enumerate}
Because $\frac{f(a)Z_w}{Z_{wf}}\in \left[\frac{c}{C}, \frac{C}{c}\right]$ we have from the above results:
\begin{align*}
    \left|1 - \frac{f(a)Z_w}{Z_{wf}}\right| \le \frac{C-c}{C+c}\left[ 1 + \frac{f(a)Z_w}{Z_{wf}}\right]
\end{align*}
Applying this result in Eq \ref{Eq_interm} gives:
\begin{align*}
    &||p - q||_{\mathrm{TV}} = \frac{1}{2}\sum_{a}p(a)\left|1 - \frac{f(a)Z_w}{Z_{wf}}\right| \le \frac{1}{2}\frac{C-c}{C+c}\sum_{a}p(a)\left[1 + \frac{f(a)Z_w}{Z_{wf}}\right] = \\\nonumber
    &\frac{1}{2}\frac{C-c}{C+c}\left[\sum_ap(a) + \sum_a\frac{w(a)}{Z_w}\frac{f(a)Z_w}{Z_{wf}}\right] = \frac{1}{2}\frac{C-c}{C+c}\left[1 + \sum_{a}\frac{f(a)w(a)}{Z_{wf}}\right] = \frac{2}{2}\frac{C-c}{C+c} = \frac{C-c}{C+c}
\end{align*}
\end{proof}

\begin{proof}[Proof of Theorem \ref{thm:truncation_bias}]
\textbf{Part (i).}
At a scoring point at position $t$ (start of a chunk), the $K$ tokens within the chunk (positions $t{+}1$ through $t{+}K$) are scored exactly as part of the accumulated reward. The future correction accounts for the remaining $T - t - K$ tokens after the chunk:
\begin{equation*}
    \zeta_t(a_t, q) = \mathbb{E}_{a_{t+K+1:T} \sim \pi_\theta}\!\left[\prod_{s>t+K} \exp(r_s/\alpha)\right].
\end{equation*}
Since $|r_s| \leq r_{\max}$ and there are $T - t - K$ future tokens:
\begin{equation*}
    e^{-(T-t-K) r_{\max}/\alpha} \leq \prod_{s>t+K} e^{r_s/\alpha} \leq e^{(T-t-K) r_{\max}/\alpha} \quad \Longrightarrow \quad \zeta_t(a_t, q) \in \bigl[e^{-(T-t-K) r_{\max}/\alpha},\; e^{(T-t-K) r_{\max}/\alpha}\bigr].
\end{equation*}
The truncation estimator sets $\hat{\zeta}_t \equiv 1$. Applying Lemma~\ref{lem:tv_reweight} with $c = e^{-(T-t-K)r_{\max}/\alpha}$, $C = e^{(T-t-K)r_{\max}/\alpha}$, and $f = \zeta_t$:
\begin{equation*}
    \|\tilde{\pi}_{\mathrm{trunc}} - \pi_{\mathrm{target}}\|_{\mathrm{TV}} \leq \frac{e^{(T-t-K)r_{\max}/\alpha} - e^{-(T-t-K)r_{\max}/\alpha}}{e^{(T-t-K)r_{\max}/\alpha} + e^{-(T-t-K)r_{\max}/\alpha}} = \tanh\!\left(\frac{(T-t-K)\,r_{\max}}{\alpha}\right).
\end{equation*}
Scoring decisions occur at chunk boundaries $t = 0, K, 2K, \ldots, T{-}K$, giving $T/K$ decision points. By the standard Markov chain perturbation bound \citep{levin2026markov}, the trajectory-level error compounds:
\begin{align*}
    \|\tilde{\pi}_{\mathrm{traj}} - \pi_{\mathrm{target}}\|_{\mathrm{TV}} \leq \sum_{j=0}^{T/K - 1} \tanh\!\left(\frac{(T - jK - K)\,r_{\max}}{\alpha}\right) = \\
    \sum_{m=1}^{T/K} \tanh\!\left(\frac{(T - mK)\,r_{\max}}{\alpha}\right) \leq \frac{r_{\max}}{\alpha}\frac{T(T-K)}{2K},
\end{align*}
using $\tanh(x) \leq x$ and the identity $\sum_{m=1}^{n}(T - mK) = K \cdot \frac{n(n-1)}{2} = \frac{T(T-K)}{2K}$ with $n = T/K$. For comparison, naive per-token sampling (Eq.~\ref{eq:naive}) truncates at every token with horizon $T - t$, giving the looser bound $\frac{r_{\max}}{\alpha}\frac{T(T-1)}{2}$.

\smallskip
\textbf{Part (ii).}
Under the zero-mean condition $\mathbb{E}_{a_t \sim \pi_\theta(\cdot \mid q, a_{<t})}[r_t(a_t, q)] = 0$, the variables $\{r_s(a_s, q)/\alpha\}_{s>t+K}$ are bounded martingale differences: $|r_s/\alpha| \leq r_{\max}/\alpha$ and $\mathbb{E}[r_s/\alpha \mid q, a_{<s}] = 0$. The future correction is the moment-generating function of their sum:
\begin{equation*}
    \zeta_t(a_t, q) = \mathbb{E}\!\left[\exp\!\left(\sum_{s>t+K} \frac{r_s}{\alpha}\right)\right].
\end{equation*}
By Hoeffding's lemma applied conditionally to each martingale difference and iterated via the tower property:
\begin{equation*}
    \zeta_t(a_t, q) \leq \exp\!\left(\frac{(T-t-K)\, r_{\max}^2}{2\alpha^2}\right).
\end{equation*}
By Jensen's inequality, since $\mathbb{E}\bigl[\sum_{s>t+K} r_s/\alpha \,\big|\, q, a_{<t}, a_t\bigr] = 0$:
\begin{equation*}
    \zeta_t(a_t, q) \geq \exp\!\left(\mathbb{E}\!\left[\sum_{s>t+K} \frac{r_s}{\alpha}\right]\right) = 1.
\end{equation*}
Applying Lemma~\ref{lem:tv_reweight} with $c = 1$ and $C = \exp\!\bigl((T-t-K)\,r_{\max}^2 / (2\alpha^2)\bigr)$:
\begin{equation*}
    \|\tilde{\pi}_{\mathrm{trunc}} - \pi_{\mathrm{target}}\|_{\mathrm{TV}} \leq \frac{e^{(T-t-K)r_{\max}^2/(2\alpha^2)} - 1}{e^{(T-t-K)r_{\max}^2/(2\alpha^2)} + 1} = \tanh\!\left(\frac{(T-t-K)\, r_{\max}^2}{4\alpha^2}\right).
\end{equation*}
The trajectory bound follows identically to part~(i), summing over $T/K$ scoring points:
\begin{equation*}
    \sum_{m=1}^{T/K} \tanh\!\left(\frac{(T - mK)\,r_{\max}^2}{4\alpha^2}\right) \leq \frac{r_{\max}^2}{4\alpha^2}\frac{T(T-K)}{2K}. \qedhere
\end{equation*}
\end{proof}

\section{Experimental Setups and Implementation Details}
\label{app:experimental_details}

\textbf{Prompts.}
Table~\ref{tab:task_prompts} lists the task prompt templates used for MATH500, HumanEval, and GPQA. All decoding methods within a model--benchmark setting use the same task prompt, so comparisons vary only the decoding and candidate-ranking procedure. For contrastive scoring, the positive or negative context is appended to the user message after the original query and immediately before the assistant-generation marker. Denoting the system instruction by $s$, the user query by $q$, and either context by $c^{\pm}$, the context-conditioned serialization is
\begin{equation}
    [\textsc{System}:s][\textsc{User}:q \oplus c^{\pm}][\textsc{Assistant}:],
    \label{eq:cue_serialization}
\end{equation}
where $\oplus$ denotes textual concatenation. The system instruction is unchanged across the base, positive, and negative evaluations. The base context contains only $q$ in the user message, while the context-conditioned prompts append $c^+$ or $c^-$, respectively. These contexts affect candidate scoring but are not included in the returned response. Table~\ref{tab:qwen_prompt_example} gives a complete positive-context example using the Qwen chat format; the negative context is formed by replacing the final positive context with ``This is an example for a response with wrong reasoning:''.

\textbf{Hyperparameter settings.}
Table~\ref{tab:decoding_hyperparameters} summarizes the principal settings used in the main evaluation. All methods use a block size of $K=32$, 96 generation iterations, and a maximum generation length of $T_{\max}=3072$ tokens, with early termination upon emitting an EOS token. Standard and low-temperature sampling use a single trajectory with $N=W=1$ and disable contrastive scoring. Standard sampling, beam search, and both contrastive beam search variants use a generation temperature of $\tau=1.0$, while the low-temperature baseline uses $\tau=0.25$. Standard and contrastive beam search use an active population of $N=16$ and pruning factor $W=4$, retaining the top $N/W=4$ unfinished trajectories after each scoring stage. These settings are used throughout the main experiments unless explicitly varied in an ablation. The baseline ranks candidates using only the mean base-context log-likelihood ($1/\alpha=0$), whereas the contrastive variants use $1/\alpha\in\{0.25,0.7\}$. To ensure numerical stability, all likelihood computations are carried out in log-space.

\textbf{Inference efficiency measurement.}
We measure all inference efficiency in Figure~\ref{fig:inference_efficiency} on a
single accelerator with the same runtime stack.
Each time is the wall-clock time per prompt for generation and candidate
scoring, and excludes model loading, dataset loading, and metric computation. We
count completion tokens as all generated tokens per prompt across every
candidate, not just the final answer, and exclude prompt tokens and the scoring
passes.

\begin{table*}[t]
	\centering
	\begin{tabularx}{\textwidth}{>{\hsize=0.2\hsize\bfseries}X >{\hsize=0.8\hsize}X}
		\toprule
		Task & Prompt Template \\
		\midrule
		
		MATH500 & 
		\textbf{System:} You are a helpful AI Assistant that provides well-reasoned and detailed responses. You first think about the reasoning process as an internal monologue and then provide the user with the boxed answer. Respond in the following format: \textless{}think\textgreater{} ... \textless{}/think\textgreater{} \textless{}answer\textgreater{} \textbackslash{}boxed\{...\} \textless{}/answer\textgreater{}. \newline \newline
		\textbf{User:} \{problem\} \\
		\midrule
		
		HumanEval & 
		\textbf{System:} Please reason step by step internally. Then output ONLY the final Python code that completes the task within triple backticks. Do not include explanations, markdown, or \textbackslash{}boxed\{\}. \newline \newline
		\textbf{User:} Complete the following Python function:\newline
		\{prompt\} \\
		\midrule
		
		GPQA & 
		\textbf{System:} You are a helpful AI Assistant. Please reason step by step, and put your final answer within \textbackslash{}boxed\{\}. \newline \newline
		\textbf{User:} Answer the following multiple choice question. The last line of your response should be of the following format: `\textbackslash{}boxed\{\$LETTER\}' (without quotes) where LETTER is one of ABCD (ex. `\textbackslash{}boxed\{A\}'). Think step by step before answering. \newline \newline
		\{question\} \newline \newline
		A) \{choice\_A\} \quad B) \{choice\_B\} \quad C) \{choice\_C\} \quad D) \{choice\_D\} \\
		
		\bottomrule
	\end{tabularx}
	\caption{Task Prompt Templates}
	\label{tab:task_prompts}
\end{table*}

\begin{table*}[t]
\centering
\small
\begin{tabularx}{\textwidth}{lX}
\toprule
\textbf{Role} & \textbf{Serialized Qwen prompt content} \\
\midrule
System & \texttt{<|im\_start|>system} You are a helpful AI Assistant that provides well-reasoned and detailed responses. You first think about the reasoning process as an internal monologue and then provide the user with the boxed answer. Respond in the following format: \textless{}think\textgreater{} ... \textless{}/think\textgreater{} \textless{}answer\textgreater{} \textbackslash{}boxed\{...\} \textless{}/answer\textgreater{}. \texttt{<|im\_end|>} \\
User query and positive context & \texttt{<|im\_start|>user} Solve for $x$: $2^{x+1}=32$. This is an example for a response with excellent reasoning: \texttt{<|im\_end|>} \\
Assistant prefix & \texttt{<|im\_start|>assistant} \\
\bottomrule
\end{tabularx}
\caption{Example serialization of a MATH500-style query with the positive contrastive context under the Qwen chat template.}
\label{tab:qwen_prompt_example}
\end{table*}

\begin{table*}[t]
\centering
\small
\begin{tabularx}{\textwidth}{lX}
\toprule
\textbf{Method} & \textbf{Hyperparameters} \\
\midrule
Standard sampling & $N=1$, $W=1$, $K=32$, $\tau=1.0$ \\
Low-temperature sampling & $N=1$, $W=1$, $K=32$, $\tau=0.25$ \\
Beam search & $N=16$, $W=4$, $K=32$, $\tau=1.0$, $1/\alpha=0$ \\
Contrastive beam search (low $\alpha$) & $N=16$, $W=4$, $K=32$, $\tau=1.0$, $1/\alpha=0.25$ \\
Contrastive beam search (high $\alpha$) & $N=16$, $W=4$, $K=32$, $\tau=1.0$, $1/\alpha=0.7$ \\
Power Sampling & $K_t=16$, $M_t=16$, $B=32$ (equivalent to $K$), $\tau=0.25$ (low) and $\tau=1.0$ (high), $\alpha=4$ \\
Best-of-$N$ & $N=16$, $\tau=1.0$ \\
\bottomrule
\end{tabularx}
\caption{Main hyperparameter settings for each method.}
\label{tab:decoding_hyperparameters}
\end{table*}

\clearpage
\section{Further Experimental Results}

\subsection{Additional Pass@${k}$ results and analysis.}
\label{app:passk}
Figure~\ref{fig:passk_ps_vs_cbs_deepseek} reports pass@$k$ on MATH500 for DeepSeek-Math-7B-Instruct. Standard beam search marginally surpasses power sampling at $k=1$ and $k=2$ but yields diminishing returns, falling behind at $k=3$ and $k=4$. In contrast, CBS and power sampling scale consistently, with CBS outperforming both methods across all $k$. Because both beam search variants use a temperature of $\tau=1.0$ to maintain generation diversity, this divergence demonstrates that unguided likelihood maximization is insufficient for complex reasoning. Instead, the contrastive contexts in CBS effectively steer the search trajectory toward correct solutions.

\begin{figure}[h]
    \centering
    \includegraphics[width=0.4\columnwidth]{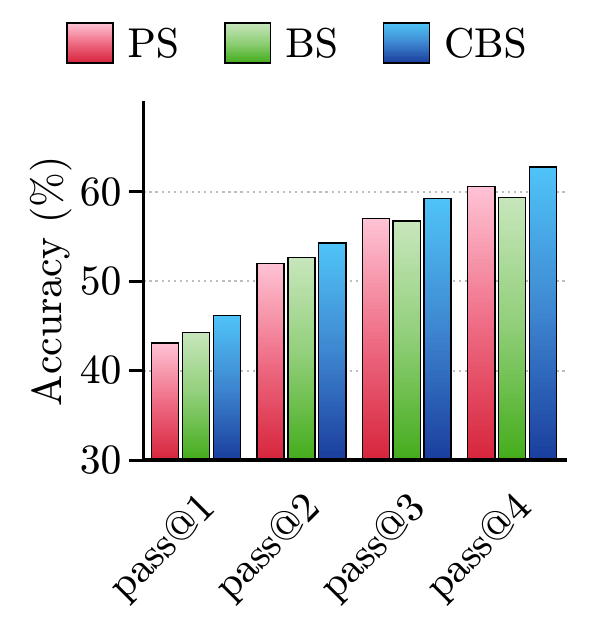}
    \caption{Pass@$k$ performance $k \in \{1, 2, 3, 4\}$ on MATH500 dataset for Power sampling (PS), standard beam search (BS) and contrastive beam search (CBS) using Deepseek-Math-7B-Instruct.}
    \label{fig:passk_ps_vs_cbs_deepseek}
\end{figure}

\subsection{Context content ablation}
\label{app:cue_content_ablation}

 Having established that performance gains arise from the contrastive semantic signal, we further explore how context content impacts performance. We generate 10 contrastive context pairs with varying contexts (Table~\ref{tab:cue-pairs}). Each positive contexts is paired with its counterfactual negative (e.g., "excellent reasoning" vs. "wrong reasoning" or "complete" vs. "incomplete" treatment). We use Best-of-$N$ to generate $N = 16$ candidates, recompute log probabilities under the positive and negative prefixes, and rank them by ${lp}_{\text{base}} + \frac{1}{\alpha}\cdot({lp}_{+} - {lp}_{-})$. We test this across three benchmarks with distinct demands: MATH500 (arithmetic reasoning), HumanEval (functional correctness in coding), and GPQA (expert-level science).

 As shown in Figure~\ref{fig:contrastive-cue-contexts}, on the MATH500 dataset, the reasoning and step verification contexts yield the highest accuracy. This aligns with how math models usually fail, where one incorrect step breaks the whole solution, making these specific contexts highly effective. The remaining contexts perform near the baseline, with coherence and self-correction ranking lowest. Conversely, HumanEval task exhibits minimal contexts differentiation. Most contexts, including reasoning, completeness, reliability, and logical validity, cluster at similar accuracy levels, while coherence and decomposition fall marginally below the baseline. Because the reranker evaluates text rather than executing the code, modifying the evaluation criteria provides no leverage for detecting hidden runtime errors. Similar to MATH500, reasoning remains the strongest context on GPQA, but the subsequent rankings shift to favor reliability and logical validity. This benchmark evaluates complex scientific questions that depend on strict factual consistency and sound argumentation. This explains why logical validity and reliability provide an advantage while structural contexts like coherence do not. Although contrastive contexts yield only marginal accuracy gains during best-of-N terminal reranking, our results demonstrate that applying them during intermediate steering produces substantial performance improvements.
 
 Overall, the generic reasoning context consistently performs best across all three benchmarks. However, performance of secondary contexts is domain-dependent, such as step verification for math and logical validity for science, demonstrating that the nature of the task determines which context works best.

  \begin{figure}[htbp]
    \centering
    
    \begin{subfigure}[b]{0.77\textwidth}
        \centering
        \includegraphics[width=1\columnwidth]{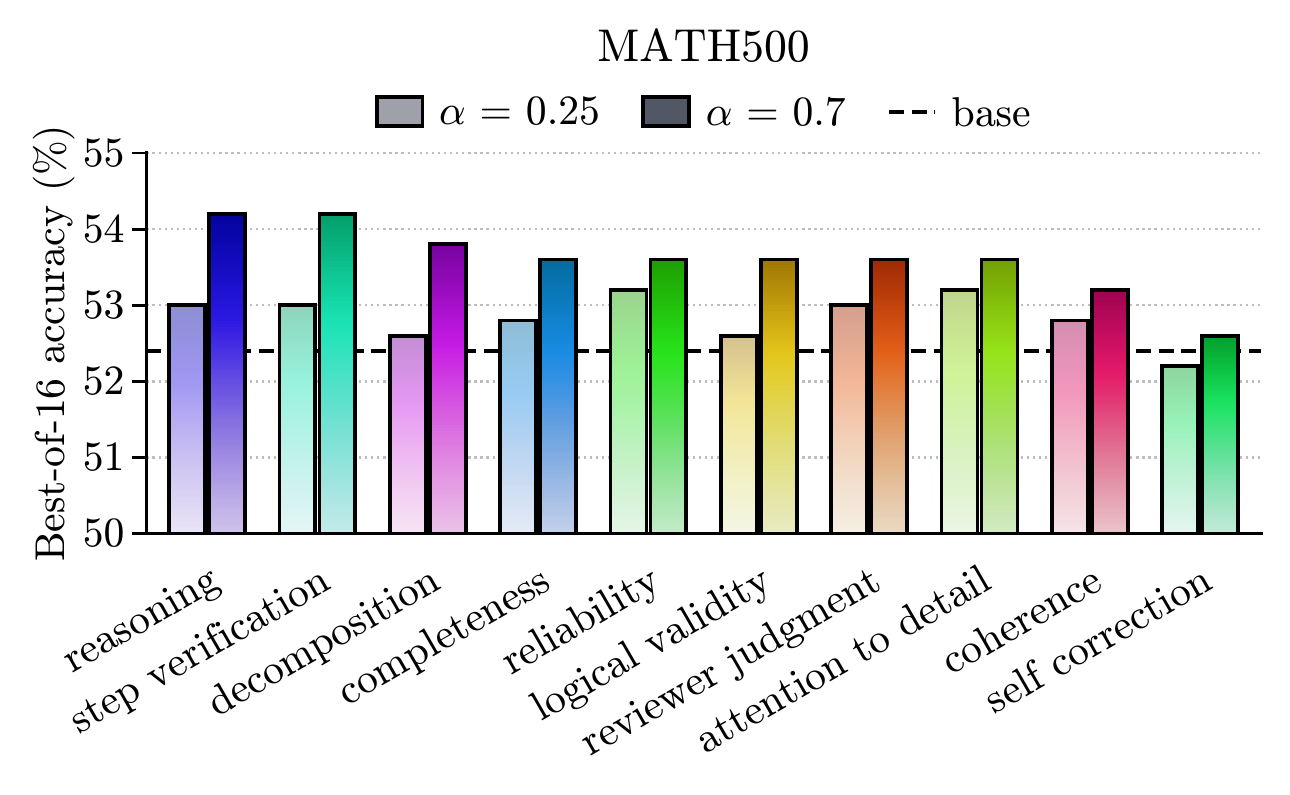}
        \label{fig:sub-a}
    \end{subfigure}
    \vspace{0.3cm} 
    \begin{subfigure}[b]{0.77\textwidth}
        \centering
        \includegraphics[width=1\columnwidth]{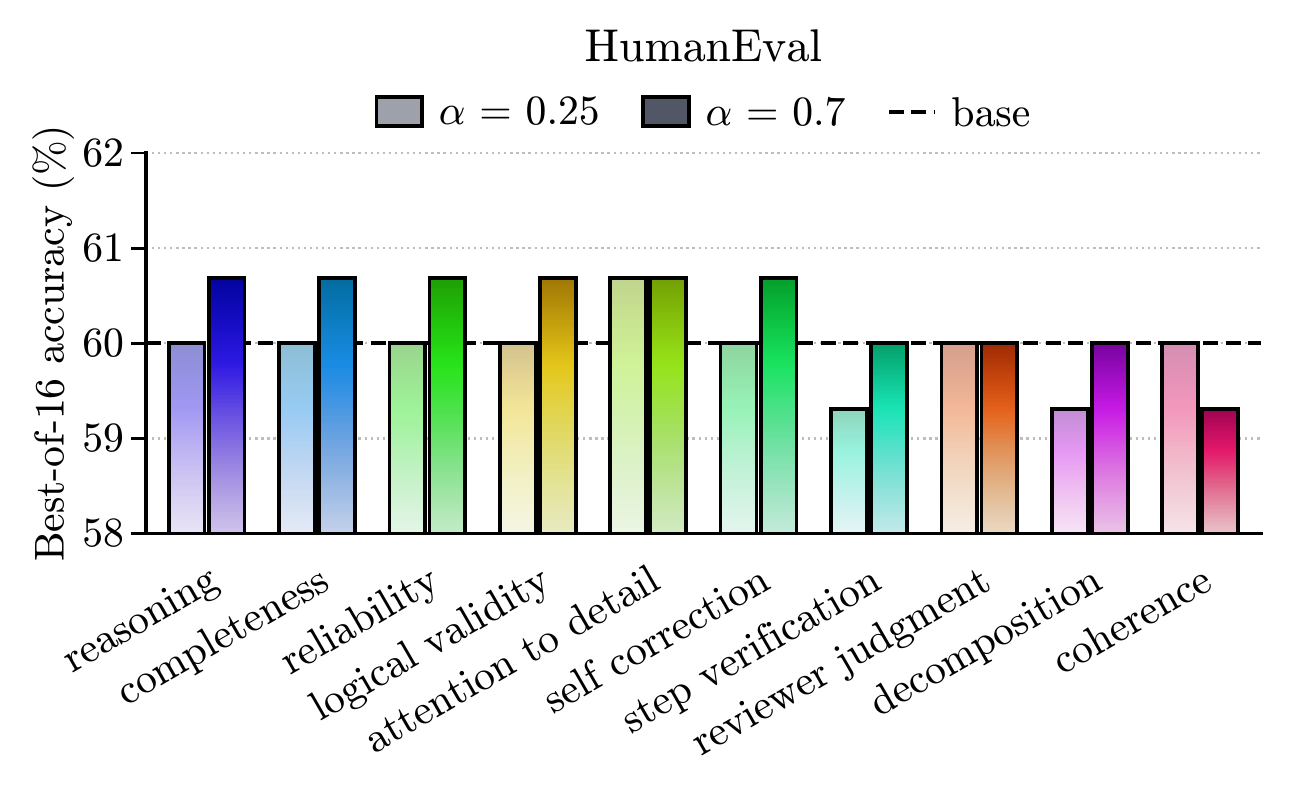}
        \label{fig:sub-b}
    \end{subfigure}
    \vspace{0.3cm}
    \begin{subfigure}[b]{0.77\textwidth}
        \centering
        \includegraphics[width=1\columnwidth]{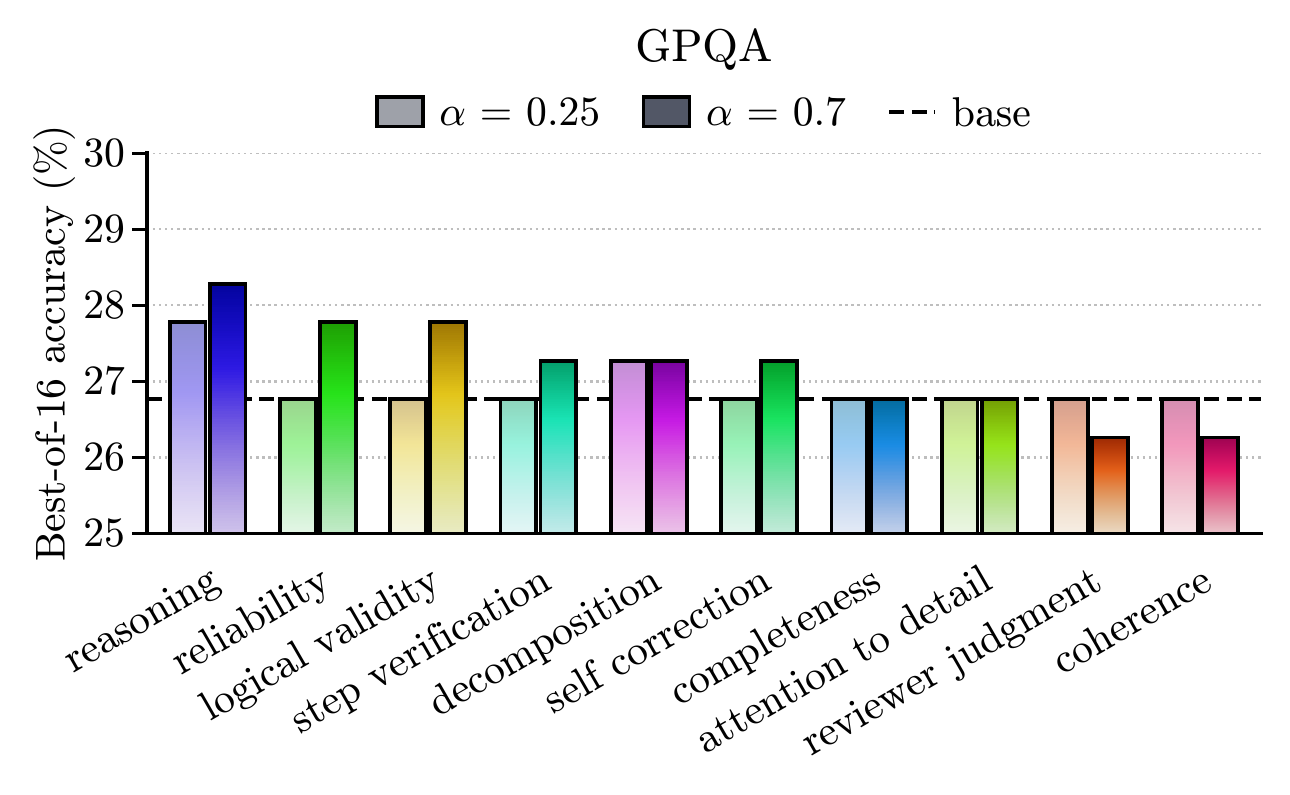}
        \label{fig:sub-c}
    \end{subfigure}
    
    \caption{Best-of-16 accuracy for ten contrastive context pairs on MATH500, HumanEval and GPQA (Qwen2.5-7B), at the two deployed steering strengths. Dashed line: no-context baseline.}
    \label{fig:contrastive-cue-contexts}
\end{figure}

\begin{table}[t]
\centering
\small
\renewcommand{\arraystretch}{1.0}
\setlength{\tabcolsep}{1pt}
\renewcommand{\tabularxcolumn}[1]{m{#1}}
\begin{tabularx}{\columnwidth}{@{} >{\raggedright\arraybackslash}m{0.16\columnwidth} >{\raggedright\arraybackslash}X @{}}
\toprule
\textbf{Content} & \textbf{Contrastive Context Pair} \\
\midrule
Reasoning &
{\scriptsize \texttt{Positive:} \texttt{``This is an example for a response with excellent reasoning:''}\newline
\texttt{Negative:} \texttt{``This is an example for a response with wrong reasoning:''}
} \\
\arrayrulecolor{gray!40}\midrule\arrayrulecolor{black}
Completeness &
{\scriptsize \texttt{Positive:} \texttt{``This is an example for a response with a complete treatment of the problem:''}\newline
\texttt{Negative:} \texttt{``This is an example for a response with an incomplete treatment of the problem:''}
} \\
\arrayrulecolor{gray!40}\midrule\arrayrulecolor{black}
Step verification &
{\scriptsize \texttt{Positive:} \texttt{``This is an example for a response that carefully verifies each step of its reasoning:''}\newline
\texttt{Negative:} \texttt{``This is an example for a response that skips verification and rushes to a conclusion:''}
} \\
\arrayrulecolor{gray!40}\midrule\arrayrulecolor{black}
Reliability &
{\scriptsize \texttt{Positive:} \texttt{``This is an example for a response that is reliable and trustworthy:''}\newline
\texttt{Negative:} \texttt{``This is an example for a response that is unreliable and error-prone:''}
} \\
\arrayrulecolor{gray!40}\midrule\arrayrulecolor{black}
Logical validity &
{\scriptsize \texttt{Positive:} \texttt{``This is an example for a response with logically sound arguments:''}\newline
\texttt{Negative:} \texttt{``This is an example for a response with logical fallacies:''}
} \\
\arrayrulecolor{gray!40}\midrule\arrayrulecolor{black}
Reviewer judgment &
{\scriptsize \texttt{Positive:} \texttt{``This is an example for a response that a careful reviewer would approve without hesitation:''}\newline
\texttt{Negative:} \texttt{``This is an example for a response that a careful reviewer would reject immediately:''}
} \\
\arrayrulecolor{gray!40}\midrule\arrayrulecolor{black}
Coherence &
{\scriptsize \texttt{Positive:} \texttt{``This is an example for a response that is internally consistent from start to finish:''}\newline
\texttt{Negative:} \texttt{``This is an example for a response that contradicts itself partway through:''}
} \\
\arrayrulecolor{gray!40}\midrule\arrayrulecolor{black}
Decomposition &
{\scriptsize \texttt{Positive:} \texttt{``This is an example for a response that breaks the problem into clear, manageable steps:''}\newline
\texttt{Negative:} \texttt{``This is an example for a response that attempts the problem in one confused leap:''}
} \\
\arrayrulecolor{gray!40}\midrule\arrayrulecolor{black}
Attention to detail &
{\scriptsize \texttt{Positive:} \texttt{``This is an example for a response that pays careful attention to every detail:''}\newline
\texttt{Negative:} \texttt{``This is an example for a response that overlooks important details:''}
} \\
\arrayrulecolor{gray!40}\midrule\arrayrulecolor{black}
Self-correction &
{\scriptsize \texttt{Positive:} \texttt{``This is an example for a response that catches and corrects its own mistakes:''}\newline
\texttt{Negative:} \texttt{``This is an example for a response that persists in its mistakes without noticing them:''}
} \\
\bottomrule
\end{tabularx}
\caption{Examples of 10 contrastive context pairs. Every pair shares same initial framing with
``This is an example for a response \ldots{}''. Each positive context is
paired with its counterfactual negative.}
\label{tab:cue-pairs}
\end{table}

\end{document}